\newtheorem{theorem}{Theorem}[section]
\DeclareMathOperator{\argmin}{argmin}
\title{Bayes-optimal Hierarchical Classification over Asymmetric Tree-Distance Loss}								
\author{Dheeraj Mekala\\ \textit{CSE Dept, IIT Kanpur} \\Vivek Gupta \\\textit{Microsoft Research, Bangalore}}								
\date{September 23, 2017}											
\let\thetitle\@title
\let\theauthor\@author
\let\thedate\@date
\begin{document}


\begin{titlepage}
	\centering
    \vspace*{-2 cm}
    \textsc{\LARGE Indian Institute of Technology\\[0.5cm] Kanpur}\\[2.0 cm]	
    \includegraphics[scale = 0.75]{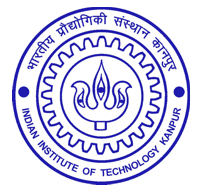}\\[1.0 cm]	
	\textsc{\Large CS396A}\\[0.5 cm]				
	\textsc{\large Undergraduate Project Report}\\[0.5 cm]				
	\rule{\linewidth}{0.2 mm} \\[0.4 cm]
	{ \huge \bfseries \thetitle}\\
	\rule{\linewidth}{0.2 mm} \\[1.5 cm]
	
	\begin{minipage}{0.5\textwidth}
		\begin{flushleft} \large
			\emph{Author:}\\
			\theauthor
			\end{flushleft}
			\end{minipage}~
			\begin{minipage}{0.4\textwidth}
			\begin{flushright} \large
			\vspace{-0.5 cm}
            \emph{Mentor:} \\
			Prof. Purushottam Kar\\ \textit{CSE Dept, IIT Kanpur} \\
			Prof. Harish Karnick\\ \textit{CSE Dept, IIT Kanpur}
		\end{flushright}
	\end{minipage}\\[2 cm]
	
 
	\vfill
	
\end{titlepage}


\tableofcontents
\pagebreak

\section{Abstract}
Hierarchical classification is supervised multi-class classification problem over the set of class labels organized according to a hierarchy. In this
project, we study the work by \citet{ramaswamy2015convex} on hierarchical classification over symmetric tree distance loss. We extend the consistency of hierarchical classification algorithm over asymmetric tree distance loss. We design a $\mathcal{O}(nk\log{}n)$ algorithm to find bayes optimal classification for a k-ary tree as hierarchy. We show that under reasonable assumptions over asymmetric loss function, the
Bayes optimal classification over this asymmetric loss can be found in $\mathcal{O}(k\log{}n)$. We exploit this insight and attempt to extend the Ova-Cascade algorithm \citet{ramaswamy2015convex} for hierarchical classification over asymmetric loss.

\section{Introduction}
Hierarchical Classification is a system of grouping objects according to a hierarchy. Class labels are organized into a pre-defined hierarchy in many practical applications of hierarchical classification. For example, products in
e-commerce industry are generally organized into multilevel hierarchical categories. A general hierarchical classification poses us following challenges: a) Many class labels have data that is extremely sparse. Classifier might get biased to the class label which has larger data, b) Hierarchy forces some constraints on activation of labels. If a node is a true label for a data point, then its parent should also be a possible label, thus, parent node has to be necessarily active, c) The prediction should be fast enough for practical use.\citet{gupta2016product}. Our work settings are similar to that of \citet{ramaswamy2015convex} i.e. class labels are nodes in a tree. We use tree-distance loss \citet{sun2001hierarchical} as our evaluation metric. The main contributions of this project are:

\begin{itemize}
    \item We study Bayes optimal classification over symmetric tree distance loss by \citet{ramaswamy2015convex} and we prove that it is not only sufficient but also necessary for Bayes optimal classification over symmetric tree distance loss.
    \item We propose $\mathcal{O}(nk\log{}n)$ algorithm to find Bayes optimal classification over symmetric/asymmetric loss for a k-ary tree as hierarchy.
    \item Under reasonable assumptions on asymmetric loss, we propose $\mathcal{O}(k\log{}n)$ algorithm to find Bayes optimal classification over asymmetric loss for a k-ary tree as hierarchy and also prove its sufficiency and necessity.
\end{itemize}

\section{Conventions and Notations}
We use the same conventions and notations as that of \citet{ramaswamy2015convex}.
Let the instance space be $\chi$ and let $Y = [n] = {1,..., n}$ be set of class labels. Let $H = ([n], E, W)$  be a tree over the class labels, with edge set $E$, and finite, positive edge weights given by $W$.
$\Delta_n$ denotes the probability simplex in $\mathbb{R}^n: \Delta_n = \{p \in \mathbb{R}^n_+ : \sum_{i} p_i = 1 \}$.\\ \\
For the tree $H = ([n], E, W)$ with root $r$, we define following:\\

$D(y)$ = Set of descendants of $y$ including $y$

$P(y)$ = Parent of $y$

$C(y)$ = Set of Children of $y$

$U(y)$ = Set of ancestors of $y$, not including $y$

$S_y(p) = \sum\limits_{i \in D(y)} p_i$

$l^H(y, y')$ = Symmetric loss i.e. Tree distance loss where $y$ is the true label and $y'$ is predicted label

$l^H(y')$ = Column vector of size $n$ where each row $i$ is $l^H(i, y')$

$l^H_A(y, y')$ = Asymmetric loss where $y$ is the true label and $y'$ is predicted label

$l^H_A(y')$ = Column vector of size $n$ where each row $i$ is $l^H_A(i, y')$

$p$ = Column vector of size $n$ where each row is $p_i$

\begin{tcolorbox}
\textbf{Note:} If true label is $y$, loss incurred by predicting $y'$ is:

\begin{align*}
l^H(y, y') = \text{Shortest path length in H between y and y'}.\\
l^H_A(y, y') = \text{Shortest path length in H between y and y'.}
\end{align*}

For asymmetric tree, there is a pair of edges between any adjacent nodes, whose weights may necessarily not be equal. For example, let $y^1$, $y^2$ be adjacent nodes in hierarchy $H$, edge ($y^1$, $y^2$) points towards $y^2$ and edge ($y^2$, $y^1$) points towards $y^1$ and $l^H_A(y^1, y^2)$ $\neq$ $l^H_A(y^2, y^1)$. 
\end{tcolorbox}

\section{Bayes Optimal Classifier for the symmetric Tree-Distance loss}

\begin{theorem}
Let $H = ([n], E, W)$ and let $l^{H} : [n] \times [n] \rightarrow \mathbb{R_+}$ be the tree-distance loss for the tree $H$. For $x$ $\in$ $\chi$, let $p(x)$ $\in$ $\Delta_n$ be the conditional probability of the label given the instance $x$. Then there exists a $g^* : \chi \rightarrow [n]$ such that for all $x \in \chi$ the following holds:
\begin{enumerate}[label=(\alph*)]
    \item{$S_{g^*(x)}(p(x)) \geq \frac{1}{2}$}
    \item{$S_y(p(x)) \leq \frac{1}{2}, \forall y \in C(g^*(x)).$}
\end{enumerate}
And, $g^*$ is a Bayes optimal classifier for the symmetric tree distance loss ( \citet{ramaswamy2015convex})
\end{theorem}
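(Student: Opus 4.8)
\quad I would separate the claim into an \emph{existence} part (some $g^*$ satisfies (a)--(b)) and an \emph{optimality} part (any such $g^*$ is Bayes optimal). For existence, fix $x$ and write $p=p(x)$, and build $g^*(x)$ by a greedy descent from the root $r$: since $S_r(p)=1\ge\tfrac12$ the root satisfies (a), and while the current node $v$ has a child $y\in C(v)$ with $S_y(p)>\tfrac12$ I move to that child. Because $\{D(y):y\in C(v)\}$ are pairwise disjoint subsets of $D(v)$, at most one child can have subtree mass above $\tfrac12$, so the step is well defined and preserves $S_v(p)\ge\tfrac12$; the tree being finite, the descent halts at a node $v$ with $S_v(p)\ge\tfrac12$ and $S_y(p)\le\tfrac12$ for all $y\in C(v)$, and I set $g^*(x):=v$.

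\textbf{One edge.}\quad Write $L(z):=\sum_{i\in[n]}p_i\,l^H(i,z)$ for the conditional risk of predicting $z$; the Bayes optimal prediction at $x$ is any minimizer of $L$ over $[n]$. The engine of the proof is how $L$ moves across a single edge. Let $c$ be a child of $u$ with edge weight $w>0$: for $i\in D(c)$ the $i$--$u$ path passes through $c$, so $l^H(i,u)=l^H(i,c)+w$, and for $i\notin D(c)$ the $i$--$c$ path passes through $u$, so $l^H(i,c)=l^H(i,u)+w$. Summing these against $p$ yields $L(c)-L(u)=w\bigl(1-2S_c(p)\bigr)$, so descending to a child strictly decreases the risk iff that child's subtree mass exceeds $\tfrac12$, and (weakly) increases it iff that mass is at most $\tfrac12$.

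\textbf{Optimality.}\quad Fix $v=g^*(x)$ and an arbitrary $z\in[n]$, and traverse the unique $v$--$z$ path, which first ascends from $v$ to their least common ancestor $a$ and then descends from $a$ to $z$ (possibly one of the two parts is empty). On each ascending edge the endpoint farther from the root lies in $\{v\}\cup U(v)$, hence has subtree mass $\ge S_v(p)\ge\tfrac12$, so by the edge identity the risk does not decrease as we go up. On each descending edge it suffices, by the same identity plus the fact that $S$ is non-increasing along the descent, to show that the first child $c_0\in C(a)$ on the descending part has $S_{c_0}(p)\le\tfrac12$. If $a=v$ this is precisely property (b). If $a\ne v$, the ascent entered $a$ through some other child $c_1\in C(a)$ with $v\in D(c_1)$, so $S_{c_1}(p)\ge S_v(p)\ge\tfrac12$, and since $D(c_0)$ and $D(c_1)$ are disjoint subsets of $D(a)$ we get $S_{c_0}(p)\le 1-S_{c_1}(p)\le\tfrac12$. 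Hence every edge of the path is risk-nondecreasing, so $L(z)\ge L(v)$; as $z$ was arbitrary, $v$ minimizes $L$ and $g^*$ is Bayes optimal.

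\textbf{Expected difficulty.}\quad The edge identity and the ascending case are routine bookkeeping. The one delicate point is the descending case with $a\ne v$: property (b) applies only to children of $v$, not to $c_0$, so one must instead derive $S_{c_0}(p)\le\tfrac12$ from property (a) at $v$ together with the disjointness of sibling subtrees under $a$ --- getting this inequality is the crux.
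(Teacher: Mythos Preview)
Your argument is correct. Note, however, that the paper does not itself prove Theorem~4.1: it attributes the sufficiency direction to \citet{ramaswamy2015convex} and only supplies the \emph{necessity} direction separately (Theorems~4.2--4.3). The nearest in-paper proof is that of the asymmetric analog, Theorem~6.3, and it shares your two key ingredients: telescoping the risk difference along the path between the candidate $y^*$ and the competitor $y'$, and---in the case where neither is an ancestor of the other---bounding the subtree mass of the first child on the descending branch via disjointness of sibling subtrees under the least common ancestor. Your ``delicate point'' is exactly inequality~(6) in the paper's Case~4. The paper organizes the argument as a four-way positional case split rather than isolating the single-edge identity $L(c)-L(u)=w\bigl(1-2S_c(p)\bigr)$ and telescoping uniformly as you do; your packaging is a bit cleaner, but the substance is the same.
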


The proof of sufficiency of Theorem 4.1 is given by \citet{ramaswamy2015convex}. We prove necessity of above theorem for bayes optimal classification in the following section.\\

\textbf{Note:} Symmetric Tree-distance loss $l^H$ and Asymmetric loss $l^H_A$ follows triangular inequality:
\begin{align*}
l^H(a, b) + l^H(b, c) \geq l^H(a, c) \\
l^H_A(a, b) + l^H_A(b, c) \geq l^H_A(a, c)
\end{align*}

\subsection{Necessity of Theorem 4.1 for Bayes Optimal Classification for the symmetric Tree-Distance loss}

\begin{theorem}
If $\exists$ a node $y \in [n]$, $S_y(p) < \frac{1}{2}$, then $y$ cannot be Bayes Optimal Classification.
\end{theorem}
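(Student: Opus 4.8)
The plan is to prove the statement directly, by exhibiting for any node $y$ with $S_y(p) < \frac{1}{2}$ another label whose Bayes risk is strictly smaller, so that $y$ cannot minimize the risk and hence cannot be a Bayes optimal prediction. The natural candidate is the parent $P(y)$: intuitively, too little probability mass sits in the subtree rooted at $y$, so one should ``back off'' towards the root. First I would observe that $y$ cannot be the root $r$, since $S_r(p) = \sum_i p_i = 1 \geq \frac{1}{2}$ would contradict the hypothesis; hence $P(y)$ is well defined, and I may write $w > 0$ for the weight of the (unique) edge joining $y$ and $P(y)$. Throughout, the Bayes risk of predicting $y'$ is $p^\top l^H(y') = \sum_i p_i\, l^H(i, y')$.

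The key step is to decompose the risk difference $p^\top l^H(P(y)) - p^\top l^H(y) = \sum_i p_i\big(l^H(i, P(y)) - l^H(i, y)\big)$ according to whether $i \in D(y)$ or not. Since $H$ is a tree, the path between any two nodes is unique; for $i \in D(y)$ this path from $i$ to $P(y)$ passes through $y$, so $l^H(i, P(y)) = l^H(i, y) + w$, while for $i \notin D(y)$ the path from $i$ to $y$ passes through $P(y)$, so $l^H(i, P(y)) = l^H(i, y) - w$. Substituting these and using $\sum_{i \in D(y)} p_i = S_y(p)$ together with $\sum_{i \notin D(y)} p_i = 1 - S_y(p)$ yields
\begin{align*}
p^\top l^H(P(y)) - p^\top l^H(y) = w\, S_y(p) - w\,\big(1 - S_y(p)\big) = w\,\big(2 S_y(p) - 1\big).
\end{align*}

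Because $w > 0$ and $S_y(p) < \frac{1}{2}$, the right-hand side is strictly negative, i.e. predicting $P(y)$ incurs strictly smaller expected loss than predicting $y$; therefore $y$ is not Bayes optimal, which is exactly the claim. The only real obstacle is making the two path-length identities fully rigorous — they rest on uniqueness of paths in a tree and on a clean case split around the edge $(y, P(y))$ — along with the boundary check that the hypothesis already excludes $y = r$; everything else is the one-line arithmetic above. The same computation, carried out along the two oriented edges $y \to P(y)$ and $P(y) \to y$ with their possibly unequal weights, is what will later be adapted to the asymmetric loss $l^H_A$.
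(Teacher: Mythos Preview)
Your proposal is correct and follows essentially the same approach as the paper: compare the risk at $y$ with the risk at its parent $P(y)$ by splitting the sum over $D(y)$ versus $[n]\setminus D(y)$, use the tree path identities to reduce the difference to $w(2S_y(p)-1)$ (the paper writes the subtraction in the opposite order, obtaining $l^H(y',y^*)(1-2S_{y'}(p))>0$), and conclude from $S_y(p)<\tfrac12$. Your explicit remark that the hypothesis forces $y\neq r$, so $P(y)$ is well defined, is a small clarification the paper leaves implicit.
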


\begin{proof}
Let $y'$ be a node where $S_{y'}(p) < \frac{1}{2}$ and $y^* = P(y')$.\\
Bayes loss for predicting $y$ = $\langle p, l^H(y) \rangle$.\\
Consider $\langle p, l^H(y') \rangle - \langle p, l^H(y^*) \rangle$:
\begin{equation*}
    \begin{split}
    \langle p, l^H(y') \rangle - \langle p, l^H(y^*) \rangle & = \sum\limits_{y \in D(y')} p_y(l^H(y, y')-l^H(y, y^*)) + \sum\limits_{y \in [n]\setminus D(y')} p_y(l^H(y, y')-l^H(y, y^*))\\
    & = \sum\limits_{y \in D(y')} p_y(-l^H(y', y^*)) + \sum\limits_{y \in [n]\setminus D(y')} p_y(l^H(y^*, y'))\\
    & = l^H(y', y^*)(-S_{y'}(p) + 1 - S_{y'}(p))\\
    & = l^H(y', y^*)(1 - 2S_{y'}(p))\\
    & > 0
    \end{split}
\end{equation*}
Thus, predicting $y^*$ is more optimal than $y'$. Hence, $y'$ is not a Bayes Optimal Classification.
\end{proof}

\begin{theorem}
If $\exists$ a node $y' \in [n]$, $S_{y'}(p) > \frac{1}{2}$ and $\exists y^* \in D(y')-\{y'\}, S_{y^*}(p) > \frac{1}{2}$, Then $y'$ cannot be Bayes optimal classification i.e. predicting $y^*$ is more Bayes optimal than $y'$.
\end{theorem}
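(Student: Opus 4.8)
The plan is to mirror the proof of Theorem 4.2, but this time comparing $y'$ with its \emph{descendant} $y^*$ rather than its parent. I would start by writing out the Bayes loss difference $\langle p, l^H(y') \rangle - \langle p, l^H(y^*) \rangle$ and splitting the sum over $[n]$ according to whether a node lies in $D(y^*)$ or not. Since $y^* \in D(y') \setminus \{y'\}$, the path from any node to $y'$ passes through the whole subtree structure in a predictable way: for $y \in D(y^*)$ we expect $l^H(y, y') - l^H(y, y^*) = l^H(y^*, y')$, while for $y \in [n] \setminus D(y^*)$ we expect $l^H(y, y') - l^H(y, y^*) = -l^H(y^*, y')$. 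These identities are the key step and rely on the fact that in a tree the shortest path from $y$ to $y'$ either goes through $y^*$ (when $y$ is below $y^*$) or does not enter $D(y^*)$ at all (when $y$ is not below $y^*$), so that $l^H(y, y^*)$ and $l^H(y, y')$ differ exactly by the length $l^H(y^*, y')$ of the connecting segment.

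Substituting these into the split sum gives
\begin{equation*}
\langle p, l^H(y') \rangle - \langle p, l^H(y^*) \rangle = l^H(y^*, y')\left( S_{y^*}(p) - (1 - S_{y^*}(p)) \right) = l^H(y^*, y')\left( 2 S_{y^*}(p) - 1 \right).
\end{equation*}
By hypothesis $S_{y^*}(p) > \tfrac{1}{2}$ and the edge weights are positive so $l^H(y^*, y') > 0$, hence this quantity is strictly positive. Therefore predicting $y^*$ incurs strictly smaller expected loss than predicting $y'$, so $y'$ cannot be Bayes optimal. Note that the hypothesis $S_{y'}(p) > \tfrac{1}{2}$ is not actually needed for the argument — only $S_{y^*}(p) > \tfrac{1}{2}$ matters — but it is the natural companion condition coming from Theorem 4.1, so I would keep it in the statement for context.

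The main obstacle is making the two distance identities fully rigorous, i.e.\ justifying that for $y \notin D(y^*)$ the unique $y$–$y'$ path does not pass through any node of $D(y^*)$ other than possibly via $y^*$ itself — this is where the tree structure (unique paths, $y^*$ being a descendant of $y'$) is essential, and it is worth stating carefully as a small lemma about how subtree membership interacts with shortest-path lengths. Once that combinatorial fact is in hand, the rest is the short algebraic manipulation above. A symmetric remark: the same computation structure already appeared in Theorem 4.2 with $y^* = P(y')$, so one could alternatively phrase both results as instances of a single ``moving along an edge toward the heavier side decreases loss'' principle and then iterate it, but for this statement the direct two-line computation is cleanest.
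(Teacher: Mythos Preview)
Your approach mirrors the paper's almost exactly, but the second ``identity'' you plan to use is not actually an equality. For $y \in [n]\setminus D(y^*)$ you claim $l^H(y,y') - l^H(y,y^*) = -\,l^H(y^*,y')$, and you justify it by the (true) fact that the $y$--$y'$ path never enters $D(y^*)$. That fact, however, does not give the equality: take any node $y$ lying strictly between $y'$ and $y^*$ on the $y'$--$y^*$ path (or hanging off such an intermediate node). Then the $y$--$y^*$ path meets the $y$--$y'$ path at this intermediate vertex, not at $y'$, and the difference $l^H(y,y')-l^H(y,y^*)$ lies strictly between $-l^H(y^*,y')$ and $l^H(y^*,y')$. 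So the exact computation you wrote down is wrong whenever $y^*$ is more than one level below $y'$.

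The fix is the one the paper uses: replace the false equality by the triangle inequality $l^H(y,y^*) \le l^H(y,y') + l^H(y',y^*)$, i.e.\ $l^H(y,y')-l^H(y,y^*) \ge -\,l^H(y',y^*)$ for every $y\notin D(y^*)$. With this you get
\[
\langle p, l^H(y')\rangle - \langle p, l^H(y^*)\rangle \;\ge\; l^H(y',y^*)\bigl(2S_{y^*}(p)-1\bigr) \;>\;0,
\]
which is exactly the paper's argument. Your observation that the hypothesis $S_{y'}(p)>\tfrac12$ is unused is correct.
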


\begin{proof}
Let $y^* \in D(y')-\{y'\}$, $S_{y^*}(p) > \frac{1}{2}$. So $y'$ is ancestor of $y^*$.\\
Consider $\langle p, l^H(y') \rangle - \langle p, l^H(y^*) \rangle$:
\begin{equation*}
    \begin{split}
    \langle p, l^H(y') \rangle - \langle p, l^H(y^*) \rangle & = \sum\limits_{y \in D(y^*)} p_y(l^H(y, y')-l^H(y, y^*)) + \sum\limits_{y \in [n]\setminus D(y^*)} p_y(l^H(y, y')-l^H(y, y^*))\\
    & = \sum\limits_{y \in D(y^*)} p_y(l^H(y^*, y')) + \sum\limits_{y \in [n]\setminus D(y^*)} p_y(l^H(y, y')-l^H(y, y^*))\\
    & \geq \sum\limits_{y \in D(y^*)} p_y(l^H(y^*, y')) + \sum\limits_{y \in [n]\setminus D(y^*)} p_y(-l^H(y', y^*))\\
    & \geq l^H(y', y^*)(S_{y^*}(p) - 1 + S_{y^*}(p))\\
    & \geq l^H(y', y^*)(2S_{y^*}(p) - 1)\\
    & > 0
    \end{split}
\end{equation*}
Thus, predicting $y^*$ is more optimal than $y'$. Hence, $y'$ is not a Bayes Optimal Classification.
\end{proof}
From Theorem 4.2 and Theorem 4.3, the conditions mentioned in Theorem 4.1 are necessary for Bayes Optimal Classification over symmetric Tree-distance loss.

\section{$\mathcal{O}(nk\log{}n)$ algorithm for finding Bayes optimal classification for k-ary tree as hierarchy}

The naive algorithm computes risk of predicting each node in $\mathcal{O}(n\log{}n)$ time. Thus, time complexity $\mathcal{O}(n^2\log{}n)$. In this section, we present the algorithm which computes risk of predicting a node $y^p$ in $\mathcal{O}(k\log{}n)$ time resulting in $\mathcal{O}(nk\log{}n)$ time complexity.

Let $K(y') = \sum\limits_{y \in D(y')} p_y(l^H(y, y'))$.\\
If $y'$ is a leaf node, then $K(y') = 0$. \\
Let $y^*$ be a node and $y_1$ and $y_2$ be its children and assume that $K(y_1)$ and $K(y_2)$ are computed. Then,\\

\begin{equation*}
    \begin{split}
        \sum\limits_{y \in D(y^*)} p_y(l^H(y, y^*)) &= \sum\limits_{y \in D(y_1)} p_y(l^H(y, y^*)) + \sum\limits_{y \in D(y_2)} p_y(l^H(y, y^*)) \\
        & = \sum\limits_{y \in D(y_1)} p_y(l^H(y, y_1) + l^H(y_1, y^*)) + \sum\limits_{y \in D(y_2)} p_y(l^H(y, y_2) + l^H(y_2, y^*)) \\
        & = \sum\limits_{y \in D(y_1)} p_yl^H(y, y_1) + \sum\limits_{y \in D(y_1)}p_yl^H(y_1, y^*)\\   
        & + \sum\limits_{y \in D(y_2)} p_yl^H(y, y_2) + \sum\limits_{y \in D(y_2)} p_yl^H(y_2, y^*) \\
        & = K(y_1) + l^H (y_1, y^*)S_{y_1}(p) + K(y_2) + l^H(y_2, y^*)S_{y_2}(p) \\
    \end{split}
\end{equation*}

Since $y_1$ and $y_2$ are children of $y^*$, $l^H (y_1, y^*)$, $l^H (y_2, y^*)$ are edge lengths and thus $l^H (y_1, y^*)S_{y_1}(p)$, $l^H (y_2, y^*)S_{y_2}(p)$ can be computed in $\mathcal{O}(1)$ time. \\
\\
If $K(y_1)$ and $K(y_2)$ are precomputed and since $l^H (y_1, y^*)S_{y_1}(p)$, $l^H (y_2, y^*)S_{y_2}(p)$ can be computed in $\mathcal{O}(1)$ time, $K(y^*)$ can be computed in $\mathcal{O}(1)$ time.

Thus, in bottom-up fashion, $K(y)$ for all nodes $y$ can be computed in $\mathcal{O}(n)$ time.

Let the node for which we are calculating risk be $y^p$.\\
\begin{align*}
Risk = \sum\limits_{y} p_y(l^H(y, y^p)) \\
\end{align*}
\begin{figure}[h]
  \begin{center}
  \includegraphics[scale=0.5]{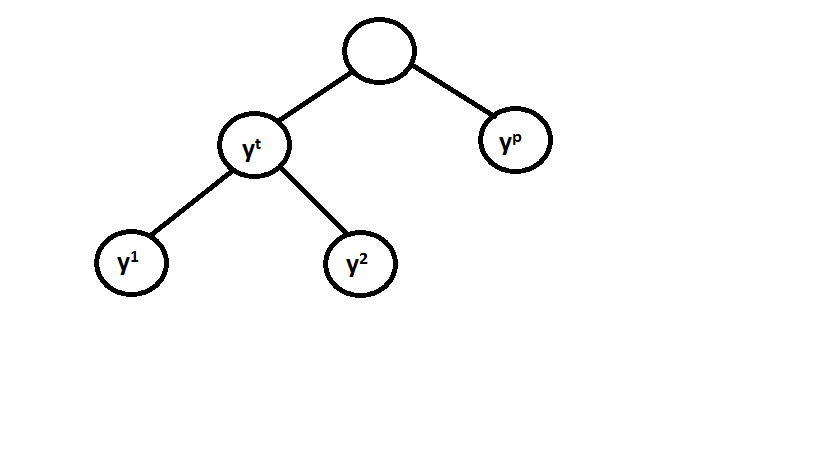}
  \end{center}
  \caption{Case-1}
  \label{fig:boat1}
\end{figure}

\textbf{Case 1: $y^t \notin D(y^p) \wedge y^p \notin D(y^t)$}\\
Let's compute risk of predicting $y^p$ with respect to descendants of $y^t$. \\ Consider $\sum\limits_{y \in D(y^t)} p_y(l^H(y, y^p))$:

\begin{equation*}
    \begin{split}
        \sum\limits_{y \in D(y^t)} p_y(l^H(y, y^p)) & = \sum\limits_{y \in D(y^t)} p_y(l^H(y, y^t)) + \sum\limits_{y \in D(y^t)} p_y(l^H(y^t, y^p))\\
        & = K(y^t) + \sum\limits_{y \in D(y^t)} p_y(l^H(y^t, y^p))\\
        & = K(y^t) + S_{y^t}(p)(l^H(y^t, y^p))\\
    \end{split}
\end{equation*}

$l^H(y^t, y^p)$ can be computed by traversing from $y^t$ to $y^p$ in $\mathcal{O}(\log{}n)$ time. Since $K(y^t)$ is precomputed, this computation takes $\mathcal{O}(\log{}n)$ time.\\ \\
\textbf{Case 2: $y^t \in D(y^p)$}\\
\begin{figure}[h]
  \begin{center}
  \includegraphics[scale=0.5]{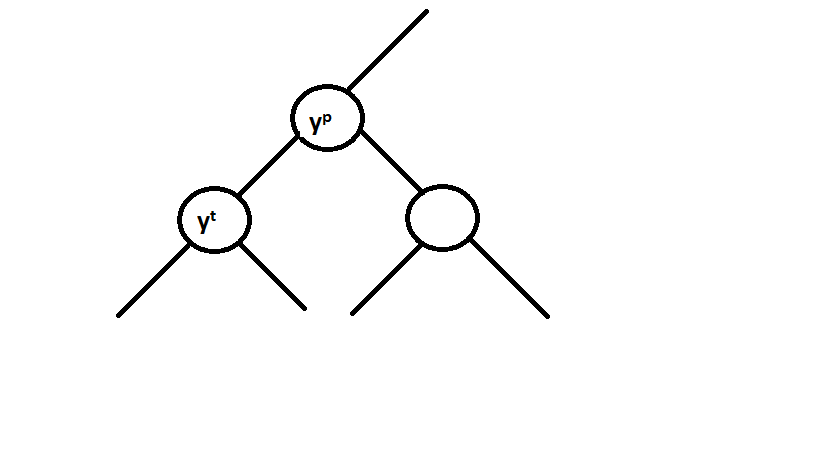}
  \end{center}
  \caption{Case-2}
  \label{fig:boat1}
\end{figure}
\begin{equation*}
    \begin{split}
        \sum\limits_{y \in D(y^p)} p_y(l^H(y, y^p)) & = K(y^p) \\
    \end{split}
\end{equation*}
Thus,  $\mathcal{O}(1)$ time.\\ \\
\newpage
\textbf{Case 3: $y^t \in U(y^p)$}\\
\begin{figure}[h]
  \begin{center}
  \includegraphics[scale=0.5]{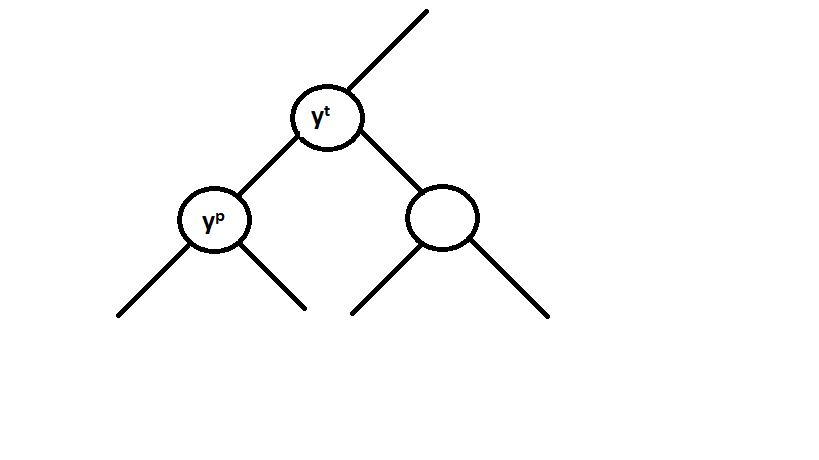}
  \end{center}
  \caption{Case-3}
  \label{fig:boat1}
\end{figure}
We have to compute $\sum\limits_{y \in U(y^p)} p_y(l^H(y, y^p))$. This can be computed by traversing from root to $y^p$, thus $\mathcal{O}(\log{}n)$ time.

The algorithm to compute Bayes optimal classification in $\mathcal{O}(nk\log{}n)$ time is described in the following section.\\ 
\newpage
\subsection{Algorithm}
\begin{algorithm}
\SetAlgoLined
\KwData{$H = ([n], E, W)$ and for each node $y$: $p_y$, $K(y)$, $S_y(p)$}
\KwResult{Bayes optimal classification $(Bopt)$}
 $min\_Risk = MAX$\;
 \For{each node $y^p$}{
  $Risk = 0$\;
  $Risk$ $+=$ $K(y^p)$\tcc*{Case-2}
  Traverse from $root$ to $y^p$ and by using running sum, compute $A = \sum\limits_{y \in U(y^p)} p_y(l^H(y, y^p))$ \;
  $Risk$ $+= A$ \tcc*{Case-3}
  \For{each node $y_1$ in the path from $root$ to $y^p$}{
    \For{each node $y_2$ $\in$ $sibling(y_1)$}{
        $Risk$ $+= K(y_2) + S_{y_2}(p)(l^H(y_2, y^p))$ \tcc*{$y_2$ $\in$ node in Case-1}
    }
  }
  \If{$Risk < min\_Risk$ }{
   $min\_Risk$ $=$ $Risk$ \; 
   $Bopt$ $=$ $y^p$ \;
  }
 }
 return $Bopt$ \;
 \caption{Bayes optimal classification}
\end{algorithm}

\subsection{Time complexity analysis}
Innermost for-loop loops $k$ times and each loop takes $\mathcal{O}(1)$ time to compute $l^H(y_2, y^p)$ because $P(y_2)$, parent of every such node $y_2$ is an ancestor of $y^p$ and since $l^H(P(y_2), y^p)$ is already computed in Case-3, it takes $\mathcal{O}(1)$ time and the for-loop surrounding it, loops $\mathcal{O}(\log{}n)$ times. Thus, $\mathcal{O}(k\log{}n)$. Since outer for-loop loops $n$ times, time complexity for k-ary tree is $\mathcal{O}(nk\log{}n)$, for binary tree, it is $\mathcal{O}(n\log{}n)$. 

\newpage
\section{Bayes optimal classification over asymmetric loss for a given hierarchy.}

\subsection{Assumptions over asymmetric loss}

We assume following on asymmetric loss throughout our work:
\begin{itemize}
    \item{$\forall y$ $\in$ $[n]$, $\forall y^1 \in C(y)$, $\frac{l^H_A(y, y^1)}{l^H_A(y, y^1)+l^H_A(y^1, y)}$,  increases down the tree.}
    \item{$\forall y \in C(r)$, $\frac{l^H_A(y, r)}{l^H_A(r, y)} \leq 1$}
\end{itemize}
\subsection{Sufficiency of Bayes optimal classification over asymmetric loss for a given hierarchy}
\begin{theorem}
Let $H = ([n], E, W)$ and let $l^{H}_A : [n] \times [n] \rightarrow \mathbb{R_+}$ be the asymmetric loss for the tree $H$. For $x$ $\in$ $\chi$, let $p(x)$ $\in$ $\Delta_n$ be the conditional probability of the label given the instance $x$. Assuming $\forall y \in [n]$, $\forall$ $y^1 \in C(y)$, $\frac{l^H_A(y, y^1)}{l^H_A(y, y^1)+l^H_A(y^1, y)}$ increases down the tree, $S_y(p)-\frac{l^H_A(P(y), y)}{l^H_A(P(y), y)+l^H_A(y, P(y))}$ decreases down the tree.
\end{theorem}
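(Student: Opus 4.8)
The plan is to reduce the ``decreases down the tree'' claim to a one-step comparison between a node and each of its children, and then to split that comparison into two independently-signed pieces. Concretely, for a non-root node $y$ write
\[
f(y) \;=\; S_y(p) \;-\; \frac{l^H_A(P(y), y)}{l^H_A(P(y), y) + l^H_A(y, P(y))},
\]
so that the statement becomes: $f(y') \le f(y)$ whenever $y' \in C(y)$. Since every node strictly below $y$ is reached from $y$ by a chain of child steps, establishing this single inequality suffices, the relation then propagating transitively along any root-to-leaf path. Using $P(y') = y$, I would rearrange
\[
f(y') - f(y) \;=\; \bigl(S_{y'}(p) - S_y(p)\bigr) \;-\; \left(\frac{l^H_A(y, y')}{l^H_A(y, y') + l^H_A(y', y)} - \frac{l^H_A(P(y), y)}{l^H_A(P(y), y) + l^H_A(y, P(y))}\right),
\]
and argue that the first bracket is $\le 0$ while the parenthesised difference is $\ge 0$, which forces $f(y') - f(y) \le 0$.

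For the first bracket I would note that $D(y') \subseteq D(y)$ because $y'$ is a descendant of $y$, and hence $S_{y'}(p) = \sum_{i \in D(y')} p_i \le \sum_{i \in D(y)} p_i = S_y(p)$ since $p \in \Delta_n$ has nonnegative entries; so $S_{y'}(p) - S_y(p) \le 0$. For the parenthesised difference I would invoke the hypothesis directly: the edge-indexed quantity $\frac{l^H_A(y, y^1)}{l^H_A(y, y^1) + l^H_A(y^1, y)}$ increases as the parent-child edge moves down the tree, and the edge $(y, y')$ sits one level below the edge $(P(y), y)$; therefore $\frac{l^H_A(y, y')}{l^H_A(y, y') + l^H_A(y', y)} \ge \frac{l^H_A(P(y), y)}{l^H_A(P(y), y) + l^H_A(y, P(y))}$, i.e.\ the difference is nonnegative. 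Combining the two bounds gives $f(y') \le f(y)$, and the theorem follows.

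The main thing to get right here is bookkeeping rather than any hard estimate: lining up the edge-indexed hypothesis with the \emph{parent-edge of consecutive nodes} (the edge above $y'$ versus the edge above $y$), and confirming that the claim is harmless at the root, where $P(y)$ — and hence $f$ — is undefined, so that ``decreases down the tree'' is really a statement among non-root nodes only. One should also record that ``increases'' is meant weakly, consistent with the fact that either of the two pieces can vanish (e.g.\ when $y$ has a single child, or when the two consecutive edge ratios coincide), and that no use of the second assumption on the root edges is needed for this particular statement.
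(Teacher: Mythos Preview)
Your proposal is correct and follows the same approach as the paper: the paper's proof simply observes that $S_y(p)$ decreases down the tree while, by assumption, $\frac{l^H_A(P(y),y)}{l^H_A(P(y),y)+l^H_A(y,P(y))}$ increases down the tree, so their difference decreases. Your version is just a more explicit unpacking of that one-line argument via the decomposition $f(y')-f(y)=\bigl(S_{y'}(p)-S_y(p)\bigr)-\bigl(\text{ratio at }y'-\text{ratio at }y\bigr)$, together with the (useful) remarks about the root and about weak monotonicity.
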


\begin{proof}
$S_y(p)$ decreases down the tree and from the assumption $\frac{l^H_A(P(y), y)}{l^H_A(P(y), y)+l^H_A(y, P(y))}$ increases down the tree. Hence, the difference between them decreases down the tree.
\end{proof}
\begin{theorem}
Let $H = ([n], E, W)$ and let $l^{H}_A : [n] \times [n] \rightarrow \mathbb{R_+}$ be the asymmetric loss for the tree $H$. For $x$ $\in$ $\chi$, let $p(x)$ $\in$ $\Delta_n$ be the conditional probability of the label given the instance $x$. Assuming $\forall$ $y \in [n]$, $\forall y^1 \in C(y)$, $\frac{l^H_A(y, y^1)}{l^H_A(y, y^1)+l^H_A(y^1, y)}$ increases down the tree and $\forall y \in C(r)$, $\frac{l^H_A(y, r)}{l^H_A(r, y)} \leq 1$, For any $y^1$, $y^2 \in [n]$, $\frac{l^H_A(P(y^1), y^1)}{l^H_A(P(y^1), y^1)+l^H_A(y^1, P(y^1))} + \frac{l^H_A(P(y^2), y^2)}{l^H_A(P(y^2), y^2)+l^H_A(y^2, P(y^2))} \geq 1$.
\end{theorem}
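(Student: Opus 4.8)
The plan is to prove the much stronger pointwise fact that
$f(y) := \frac{l^H_A(P(y), y)}{l^H_A(P(y), y) + l^H_A(y, P(y))} \geq \tfrac{1}{2}$
for every non-root node $y$; the claimed inequality is then immediate by adding the bound for $y^1$ and the bound for $y^2$, since $f(y^1) + f(y^2) \geq \tfrac12 + \tfrac12 = 1$.

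To get the pointwise bound I would first treat the base case at the top of the tree. For $y \in C(r)$ we have $P(y) = r$, so $f(y) = \frac{l^H_A(r,y)}{l^H_A(r,y) + l^H_A(y,r)}$; the second assumption gives $l^H_A(y,r) \leq l^H_A(r,y)$, hence the denominator is at most $2\,l^H_A(r,y)$ and $f(y) \geq \tfrac12$. Next I would propagate this downward using the first assumption, which says exactly that $f$ is non-decreasing along every root-to-leaf path (observe that $\frac{l^H_A(y,y^1)}{l^H_A(y,y^1)+l^H_A(y^1,y)}$ for $y^1 \in C(y)$ is precisely $f(y^1)$, and this is the same quantity whose monotonicity is invoked in Theorem 6.1). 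Any non-root node $y$ has a unique ancestor $y_0 \in C(r)$ on the root-to-$y$ path, so monotonicity yields $f(y) \geq f(y_0) \geq \tfrac12$. Applying this to $y^1$ and $y^2$ and summing finishes the argument.

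I do not expect a genuine obstacle here; the only point needing care is the interpretation of ``increases down the tree'', which must be read as monotonicity along root-to-leaf paths (equivalently $f(z) \geq f(y)$ whenever $z \in D(y)$ with $y \neq r$), so that the child-of-root bound transfers to every descendant. One should also note that the statement tacitly assumes $y^1, y^2 \neq r$, so that $P(y^1)$ and $P(y^2)$ are defined.
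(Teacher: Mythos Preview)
Your proposal is correct and mirrors the paper's own proof essentially step for step: first use the assumption $\frac{l^H_A(y,r)}{l^H_A(r,y)}\le 1$ to get $f(y)\ge\tfrac12$ for $y\in C(r)$, then invoke the monotonicity assumption to conclude $f(y)\ge\tfrac12$ for every non-root $y$, and finally add. Your remarks about the intended reading of ``increases down the tree'' and the tacit exclusion of the root are apt clarifications but do not change the argument.
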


\begin{proof}
$\forall y \in C(r)$, $\frac{l^H_A(y, r)}{l^H_A(r, y)} \leq 1$ $\rightarrow$ $\frac{l^H_A(r, y)}{l^H_A(r, y)+l^H_A(y, r)} \geq 0.5$.\\
Since $\forall$ $y^1 \in C(y)$ $\frac{l^H_A(y, y^1)}{l^H_A(y, y^1)+l^H_A(y^1, y)}$ increases down the tree, for any $y \in [n]$,$\frac{l^H_A(P(y), y)}{l^H_A(P(y), y)+l^H_A(y, P(y))}\\ \geq 0.5$. Hence, sum of any two will be greater than equal to one. 
\end{proof}
\newpage
\begin{theorem}
Let $H = ([n], E, W)$ and let $l^{H}_A : [n] \times [n] \rightarrow \mathbb{R_+}$ be the asymmetric loss for the tree $H$. For $x$ $\in$ $\chi$, let $p(x)$ $\in$ $\Delta_n$ be the conditional probability of the label given the instance $x$. Assuming $\forall y \in [n]$, $\forall y^1 \in C(y)$, $\frac{l^H_A(y, y^1)}{l^H_A(y, y^1)+l^H_A(y^1, y)}$ increases down the tree and $\forall y \in C(r)$, $\frac{l^H_A(y, r)}{l^H_A(r, y)} \leq 1$, there exists a $g^* : \chi \rightarrow [n]$ such that for all $x \in \chi$ the following holds:
\begin{enumerate}[label=(\alph*)]
    \item{$S_{g^*(x)}(p(x)) \geq \frac{l^H_A(P(g^*(x)), g^*(x))}{l^H_A(P(g^*(x)), g^*(x))+l^H_A(g^*(x), P(g^*(x)))}$}
    \item{$S_y(p(x)) \leq \frac{l^H_A(g*(x), y)}{l^H_A(g^*(x), y)+l^H_A(y, g*(x))}$, $\forall$ $y$ $\in$ $C(g*(x))$}
\end{enumerate}
And, $g^*$ is a Bayes optimal classifier for the asymmetric loss.
\end{theorem}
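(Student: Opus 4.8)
The plan is to mirror the structure of the symmetric case (Theorem 4.1) but replace the constant threshold $\frac{1}{2}$ with the node-dependent threshold $t(y) := \frac{l^H_A(P(y), y)}{l^H_A(P(y), y)+l^H_A(y, P(y))}$. First I would exhibit a candidate $g^*(x)$: walk down the tree from the root, and at each node $y$ move to the unique child $y^1$ (if one exists) for which $S_{y^1}(p(x)) > t(y^1)$, stopping when no such child exists. To see this walk is well-defined I need that at most one child can satisfy the strict inequality; this follows from Theorem 6.2, since $S_{y^1}(p) + S_{y^2}(p) \le 1$ for distinct children (the descendant sets are disjoint) while $t(y^1) + t(y^2) \ge 1$, so both $S_{y^1}(p) > t(y^1)$ and $S_{y^2}(p) > t(y^2)$ is impossible. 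Condition (b) then holds at the stopping node by construction, and condition (a) needs the termination node to satisfy $S_{g^*(x)}(p) \ge t(g^*(x))$; at the root this is the hypothesis $t(y) \ge 1/2$ combined with $S_r(p) = 1$, and at any other stopping node it holds because we only descended to it when its $S$-value exceeded its threshold.

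Next I would prove optimality by showing that the risk $\langle p, l^H_A(y)\rangle$ is minimized at $g^*(x)$. The key computation is the analogue of the displays in Theorems 4.2 and 4.3: for a node $y'$ and its parent $y^* = P(y')$,
\begin{equation*}
\langle p, l^H_A(y')\rangle - \langle p, l^H_A(y^*)\rangle = \sum_{y \in D(y')} p_y\bigl(l^H_A(y,y') - l^H_A(y,y^*)\bigr) + \sum_{y \notin D(y')} p_y\bigl(l^H_A(y,y') - l^H_A(y,y^*)\bigr).
\end{equation*}
Here, however, the asymmetry means I cannot simply collapse $l^H_A(y,y') - l^H_A(y,y^*)$ to $\pm l^H_A(y',y^*)$; instead, for $y \in D(y')$ the shortest path from $y$ to $y^*$ passes through $y'$, giving $l^H_A(y,y^*) = l^H_A(y,y') + l^H_A(y',y^*)$ exactly, and for $y \notin D(y')$ the shortest path from $y$ to $y'$ passes through $y^*$, giving $l^H_A(y,y') = l^H_A(y,y^*) + l^H_A(y^*,y')$ exactly. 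Substituting yields
\begin{equation*}
\langle p, l^H_A(y')\rangle - \langle p, l^H_A(y^*)\rangle = -l^H_A(y',y^*)\,S_{y'}(p) + l^H_A(y^*,y')\,\bigl(1 - S_{y'}(p)\bigr),
\end{equation*}
which is negative exactly when $S_{y'}(p) > \frac{l^H_A(y^*,y')}{l^H_A(y^*,y') + l^H_A(y',y^*)} = t(y')$. So moving from a parent to a child strictly decreases risk iff the child is "heavy" (exceeds its threshold), and strictly increases it otherwise.

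From this single-edge inequality I would assemble global optimality using the monotonicity supplied by Theorem 6.1: the quantity $S_y(p) - t(y)$ is nonincreasing as one descends the tree. Consequently, once the downward walk stops at $g^*(x)$ — i.e. once every child $y^1$ of $g^*(x)$ has $S_{y^1}(p) \le t(y^1)$ — the same holds for every strict descendant, so no further descent from $g^*(x)$ or from any node below it can lower the risk; and conversely, along the path from the root down to $g^*(x)$ every step was to a heavy child, so every such step strictly lowered the risk, meaning every proper ancestor of $g^*(x)$ has strictly larger risk. For a node $z$ in a different subtree, I would compare $z$ to the least common ancestor $a$ of $z$ and $g^*(x)$: the path from $a$ down to $z$ first goes to a child of $a$ that is \emph{not} on the path to $g^*(x)$, hence a child that is not heavy (again by the at-most-one-heavy-child fact, since the child leading toward $g^*(x)$ is the heavy one when $a \ne g^*(x)$), so that first step already increases risk, and monotonicity ensures no later step recovers it; when $a = g^*(x)$ the argument is the "stopping node" case already handled. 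Chaining these comparisons shows $g^*(x)$ is a global minimizer of the risk, hence Bayes optimal.

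\textbf{Main obstacle.} The delicate point is the cross-subtree comparison and, underneath it, making rigorous the claim that "no later step recovers the risk" — this is precisely where Theorem 6.1's monotonicity of $S_y(p) - t(y)$ must be converted into a statement about the \emph{sign} of each successive parent-to-child risk difference along an arbitrary downward path, not just along the canonical path to $g^*(x)$. One must check that if a child fails to be heavy then all of its descendants fail to be heavy (so every subsequent edge is risk-increasing or neutral), which is exactly the content of Theorem 6.1 but needs to be invoked carefully since the threshold $t$ changes at every edge. I expect the bookkeeping around strict versus non-strict inequalities (ties $S_{y'}(p) = t(y')$, where moving to the child is risk-neutral and one must argue existence of \emph{a} Bayes optimal $g^*$ rather than uniqueness) to be the fiddliest part of writing this up cleanly.
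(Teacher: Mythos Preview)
Your proposal is correct and follows essentially the same strategy as the paper: the single-edge risk identity you derive is exactly the paper's Case~2, and your edge-by-edge chaining along the path is precisely the telescoping sum the paper writes out in Cases~1, 3, and~4, with the sign of each summand controlled by Theorem~6.1 and the cross-subtree comparison handled via Theorem~6.2 at the least common ancestor. The one addition on your side is the explicit existence construction of $g^*$ via the greedy descent (the paper simply posits a $y^*$ satisfying (a) and (b) without arguing it exists), which is a welcome completion rather than a departure.
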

\begin{proof}
Let $y^* \in [n]$ which follows above conditions i.e. 
\begin{align*}
    & S_{y^*}(p(x)) \geq \frac{l^H_A(P(y^*), y^*)}{l^H_A(P(y^*), y^*)+l^H_A(y^*, P(y^*))} \tag{1}\\
    & \forall y \in C(y^*), S_y(p(x)) \leq \frac{l^H_A(y^*, y)}{l^H_A(y^*, y)+l^H_A(y, y^*)} \tag{2}
\end{align*}
Now we show that $y^*$ minimizes $\langle p, l^H_A(y) \rangle$ over $y \in [n]$.\\ \\
Let $y'$ $\in$ $\argmin_t \langle p, l^H_A(t) \rangle$. If $y' = y^*$ we are done, hence assume $y' \neq y^*$.
\\ \\
\textbf{Case 1:} $y' \in D(y^*)\setminus C(y^*)$\\
Let $\widehat{y}$ be the child of $y^*$ that is the ancestor of $y'$. Hence, $S_{\widehat{y}}(p) \leq \frac{l^H_A(y^*, \widehat{y})}{l^H_A(y^*, \widehat{y})+l^H_A(\widehat{y}, y^*)}$.
\begin{equation*}
    \begin{split}
    \langle p, l^H_A(y') \rangle - \langle p, l^H_A(y^*) \rangle & = \sum\limits_{Path(y^*, y') \setminus y^*} l^H_A(P(y),y)-S_y(p)[l^H_A(P(y),y)+l^H_A(y,P(y)]\\
    & \geq 0
    \end{split}
\end{equation*}
From Theorem 6.1, the difference inside the above summation increases down the tree. As our summation is traversing the tree top-down, if first difference computed is greater than equal to zero, then all the rest will be greater than equal to zero. Since the first difference is computed at $y = \widehat{y}$ and since $S_{\widehat{y}}(p) \leq \frac{l^H_A(y*, \widehat{y})}{l^H_A(y*,\widehat{y})+l^H_A(\widehat{y}, y*)}$, the difference is greater than equal to zero. Hence, whole summation is greater than equal to zero. 
\newpage
\textbf{Case 2:} $y' \in C(y^*)$ \\
Since $y'$ is a child of $y^*$, $S_{y'}(p) \leq \frac{l^H_A(y^*, y')}{l^H_A(y^*, y')+l^H_A(y', y^*)}$.

\begin{equation*}
    \begin{split}
    \langle p, l^H_A(y') \rangle - \langle p, l^H_A(y^*) \rangle & = \sum\limits_{y \in D(y')} p_y(l^H_A(y, y')-l^H_A(y, y^*)) \\
    & + \sum\limits_{y \in [n]\setminus D(y')} p_y(l^H_A(y, y')-l^H_A(y, y^*))\\
    & = \sum\limits_{y \in D(y')} p_y(-l^H_A(y', y^*)) + \sum\limits_{y \in [n]\setminus D(y')} p_y(l^H_A(y^*, y'))\\
    & = -S_{y'}(p)l^H_A(y', y^*) + l^H_A(y^*, y')[1 - S_{y'}(p)]\\
    & = l^H_A(y^*, y') - S_{y'}(p)[l^H_A(y', y^*) + l^H_A(y^*, y')]\\
    & \geq 0
    \end{split}
\end{equation*}
\\ \\
\textbf{Case 3:} $y^* \in D(y')$\\
\begin{equation*}
    \begin{split}
    \langle p, l^H_A(y') \rangle - \langle p, l^H_A(y^*) \rangle & = \sum\limits_{Path(y', y^*) \setminus y'} S_y(p)[l^H_A(P(y),y)+l^H_A(y,P(y)]-l^H_A(P(y),y)\\
    & \geq 0
    \end{split}
\end{equation*}  
From Theorem 6.1, the difference inside the above summation decreases down the tree. As our summation is traversing the tree top-down, if the last difference computed is greater than equal to zero, then all the rest will be greater than equal to zero. Since the last difference is computed at $y = y^*$ and since $S_{y^*}(p(x)) \geq \frac{l^H_A(P(y^*), y^*)}{l^H_A(P(y^*), y^*)+l^H_A(y^*, P(y^*))}$, the difference is greater than equal to zero. Hence, whole summation is greater than equal to zero.
\\ \\ \\
\textbf{Case 4:} $y' \notin D(y^*) \wedge y^* \notin D(y')$\\
Let $y^2$ be the least common ancestor of $y'$ and $y^*$, $\widehat{y}$ be the child of $y^2$ which is an ancestor of $y'$ and $y^3$ be the child of $y^2$ which is an ancestor of $y^*$.
\begin{equation*}
    \begin{split}
    \langle p, l^H_A(y') \rangle - \langle p, l^H_A(y^*) \rangle & = \sum\limits_{Path(y^2, y') \setminus y^2} l^H_A(P(y),y)-S_y(p)[l^H_A(P(y),y)+l^H_A(y,P(y)] +\\ & \sum\limits_{Path(y^2, y^*) \setminus y^2} S_y(p)[l^H_A(P(y),y)+l^H_A(y,P(y)]-l^H_A(P(y),y)\\ & \geq 0
    \end{split}
\end{equation*}
From Theorem 6.1, the difference inside first summation increases down the tree and the difference inside second summation decreases down the tree. As our summation is traversing the tree top-down, if the first difference computed in first summation is greater than equal to zero, then entire first summation will be greater than equal to zero and similarly, if the last difference computed in second summation is greater than equal to zero, then entire second summation will be greater than equal to zero. Since the first difference in first summation is computed at $\widehat{y}$ and the last difference in second summation is computed at $y^*$, the following have to hold true for the whole expression to be greater than equal to zero: \\
\begin{align*}
    & S_{\widehat{y}}(p) \leq \frac{l^H_A(y^2, \widehat{y})}{l^H_A(y^2, \widehat{y})+l^H_A(\widehat{y}, y^2)} \tag{3}\\
    & S_{y^*}(p) \geq \frac{l^H_A(P(y^*), y^*)}{l^H_A(P(y^*), y^*)+l^H_A(y^*, P(y^*))} \tag{4}
\end{align*}
Equation 4 holds true from definition of $y^*$. Observe that:
\begin{equation}\tag{5}
    \begin{split}
        S_{y^2}(p) & \leq 1\\
        & \leq \frac{l^H_A(y^2, \widehat{y})}{l^H_A(y^2, \widehat{y})+l^H_A(\widehat{y}, y^2)} + \frac{l^H_A(P(y^*), y^*)}{l^H_A(P(y^*), y^*)+l^H_A(y^*, P(y^*))} \\
    \end{split} 
\end{equation}
Consider $S_{\widehat{y}}(p)$:
\begin{equation}\tag{6}
    \begin{split}
        S_{\widehat{y}}(p) &= S_{y^2} - S_{y^3}\\
        & \leq S_{y^2} - S_{y^*}\\
        & \leq S_{y^2} - \frac{l^H_A(P(y^*), y^*)}{l^H_A(P(y^*), y^*)+l^H_A(y^*, P(y^*))}\\
        & \leq \frac{l^H_A(y^2, \widehat{y})}{l^H_A(y^2, \widehat{y})+l^H_A(\widehat{y}, y^2)}
    \end{split} 
\end{equation}
From equations (5) and (6), we can get the above result and hence, both conditions (3), (4) hold true and thus $\langle p, l^H_A(y') \rangle$ $-$ $\langle p, l^H_A(y^*) \rangle$ $\geq$ 0.\\

Putting all four cases together we have:
\begin{align*}
\langle p, l^H_A(y^*) \rangle \leq \langle p, l^H_A(y') \rangle = \min\limits_{y \in [n]} \langle p, l^H_A(y) \rangle. 
\end{align*}
Hence, proved.
\end{proof}
\section{Necessity of Theorem 6.3 for Bayes Optimal Classification over asymmetric Tree-Distance loss}
\begin{theorem}
For a node $y \in [n]$ if $S_y(p) < \frac{l^H_A(P(y),y)}{l^H_A(P(y),y)+l^H_A(y,P(y)}$, then $y$ cannot be Bayes Optimal Classification.
\end{theorem}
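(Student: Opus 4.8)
The plan is to mirror the argument used for Theorem 4.2, now keeping track of the two directed edge weights between a node and its parent. First I would dispose of the trivial case $y = r$: since $S_r(p) = 1$ and every ratio of the form $\tfrac{l^H_A(P(\cdot),\cdot)}{l^H_A(P(\cdot),\cdot)+l^H_A(\cdot,P(\cdot))}$ lies in $[0,1]$, the hypothesis $S_y(p) < \tfrac{l^H_A(P(y),y)}{l^H_A(P(y),y)+l^H_A(y,P(y))}$ cannot hold at the root, so we may assume $y \neq r$ and set $y^* = P(y)$.

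Next I would compare the Bayes risks of predicting $y$ and $y^*$ by writing
\[
\langle p, l^H_A(y) \rangle - \langle p, l^H_A(y^*) \rangle = \sum_{z \in D(y)} p_z\big(l^H_A(z,y)-l^H_A(z,y^*)\big) + \sum_{z \in [n]\setminus D(y)} p_z\big(l^H_A(z,y)-l^H_A(z,y^*)\big).
\]
The key structural observation is that in the tree $H$ the shortest path from any $z \in D(y)$ to $y^*$ passes through $y$, so $l^H_A(z,y^*) = l^H_A(z,y) + l^H_A(y,y^*)$ and the first summand collapses to $-l^H_A(y,y^*)$; symmetrically, for $z \notin D(y)$ the shortest path to $y$ passes through $y^*$, giving $l^H_A(z,y) = l^H_A(z,y^*) + l^H_A(y^*,y)$ and a summand of $l^H_A(y^*,y)$. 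Summing against $p$ and using $\sum_{z\in D(y)}p_z = S_y(p)$ yields
\[
\langle p, l^H_A(y) \rangle - \langle p, l^H_A(y^*) \rangle = -S_y(p)\,l^H_A(y,y^*) + \big(1 - S_y(p)\big)l^H_A(y^*,y) = l^H_A(y^*,y) - S_y(p)\big[l^H_A(y,y^*)+l^H_A(y^*,y)\big].
\]

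Finally I would observe that this last expression is strictly positive precisely when $S_y(p) < \tfrac{l^H_A(y^*,y)}{l^H_A(y,y^*)+l^H_A(y^*,y)} = \tfrac{l^H_A(P(y),y)}{l^H_A(P(y),y)+l^H_A(y,P(y))}$, which is the hypothesis; hence predicting $y^*$ strictly lowers the Bayes risk and $y$ cannot be a Bayes optimal classification. I expect the only real point requiring care to be the additive decomposition of the shortest-path loss along the parent edge in the asymmetric setting — one must be explicit that the relevant directed quantities are the single edge weights $l^H_A(y,y^*)$ (child-to-parent) and $l^H_A(y^*,y)$ (parent-to-child), which are generally unequal, and that because $H$ is a tree these decompositions hold with equality; the remaining algebra is routine and parallels Case 2 of the proof of Theorem 6.3.
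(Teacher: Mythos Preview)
Your proposal is correct and follows essentially the same route as the paper: compare the Bayes risk at $y$ with that at its parent $y^* = P(y)$, split the sum over $D(y)$ and its complement, use the tree structure to collapse each summand to a single directed edge weight, and obtain $l^H_A(y^*,y) - S_y(p)\big[l^H_A(y,y^*)+l^H_A(y^*,y)\big] > 0$. The paper is terser (it simply points to Case~2 of Theorem~6.3 and writes down the final expression), while you spell out the path decomposition and the root edge case, but the argument is the same.
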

\begin{proof}
Let $y'$ be a node where $S_{y'}(p) < \frac{l^H_A(P(y'),y')}{l^H_A(P(y'),y')+l^H_A(y',P(y')}$ and $y* = P(y')$.\\
Bayes loss for predicting $y$ = $\langle p, l^H_A(y) \rangle$. This is similar to case-2 in Theorem-6.3.\\
Consider $\langle p, l^H_A(y') \rangle - \langle p, l^H_A(y^*) \rangle$:
\begin{equation*}
    \begin{split}
    \langle p, l^H_A(y') \rangle - \langle p, l^H_A(y^*) \rangle & = l^H_A(y^*, y') - S_{y'}(p)[l^H_A(y', y^*) + l^H_A(y^*, y')]\\
    & > 0
    \end{split}
\end{equation*}
Thus, predicting $y^*$ is more optimal than $y'$. Hence, $y'$ is not a Bayes Optimal Classification.
\end{proof}

\begin{theorem}
For a node $y' \in [n]$ if $S_{y'}(p) > \frac{l^H_A(P(y'),y')}{l^H_A(P(y'),y')+l^H_A(y',P(y'))}$ and \\ $\exists y^* \in D(y')-\{y'\},$ $S_{y^*}(p) > \frac{l^H_A(P(y^*),y^*)}{l^H_A(P(y^*),y^*)+l^H_A(y^*,P(y^*))}$. Then $y'$ cannot be Bayes optimal classification i.e. predicting $y^*$ is more Bayes optimal than $y'$.
\end{theorem}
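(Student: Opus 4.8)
The plan is to mirror Case 3 of the proof of Theorem 6.3, strengthening the non-strict inequality obtained there to a strict one. As in that case, it suffices to show $\langle p, l^H_A(y') \rangle - \langle p, l^H_A(y^*) \rangle > 0$, since this certifies that predicting $y^*$ strictly reduces the Bayes risk and hence $y'$ is not Bayes optimal.

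First I would set up the telescoping decomposition. Since $y^* \in D(y') \setminus \{y'\}$, there is a unique downward path $y' = z_0, z_1, \dots, z_m = y^*$ with $z_{i-1} = P(z_i)$ for each $i$. Writing the risk difference as the telescoping sum $\sum_{i=1}^m \big( \langle p, l^H_A(z_{i-1}) \rangle - \langle p, l^H_A(z_i) \rangle \big)$ and evaluating each single-edge term exactly as in the Case 2 computation of Theorem 6.3 (splitting $[n]$ into $D(z_i)$ and its complement and using that tree shortest paths route through the edge between $z_i$ and $z_{i-1}$), each summand equals
\[
S_{z_i}(p)\big[l^H_A(P(z_i), z_i) + l^H_A(z_i, P(z_i))\big] - l^H_A(P(z_i), z_i),
\]
which is precisely the expression appearing inside the sum in Case 3 of Theorem 6.3. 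Because the edge weights are positive, the bracket is nonzero, so this summand is strictly positive exactly when $S_{z_i}(p) > \frac{l^H_A(P(z_i), z_i)}{l^H_A(P(z_i), z_i) + l^H_A(z_i, P(z_i))}$.

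Next I would invoke Theorem 6.1: the quantity $S_y(p) - \frac{l^H_A(P(y), y)}{l^H_A(P(y), y) + l^H_A(y, P(y))}$ is non-increasing as one descends the tree, so along the path $z_1, \dots, z_m$ it attains its minimum at the bottom node $z_m = y^*$. By hypothesis this minimum is strictly positive, hence the quantity is strictly positive at every $z_i$, $i = 1, \dots, m$. Therefore every summand in the telescoping sum is strictly positive, giving $\langle p, l^H_A(y') \rangle - \langle p, l^H_A(y^*) \rangle > 0$, as required.

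I do not anticipate a genuine obstacle, since this is a routine adaptation of Case 3 of Theorem 6.3. The only points requiring care are the propagation of strictness and a small redundancy in the statement: the hypothesis $S_{y'}(p) > \frac{l^H_A(P(y'),y')}{l^H_A(P(y'),y')+l^H_A(y',P(y'))}$ is not actually needed, because Theorem 6.1 already forces the difference at every node on the path from $y'$ down to $y^*$ (in particular at $z_1$, a child of $y'$) to be at least the strictly positive value at $y^*$. A secondary bookkeeping point is to confirm that the single-edge term formula holds verbatim for each $z_i$ regardless of how deep $z_i$ sits below the original $y'$; this is immediate, as the formula only involves $z_i$ and its own parent $P(z_i)$.
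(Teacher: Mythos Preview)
Your proposal is correct and follows essentially the same approach as the paper: the paper's proof simply cites Case~3 of Theorem~6.3, writes down the identical telescoping expression $\sum_{y \in \mathrm{Path}(y',y^*)\setminus\{y'\}} S_y(p)[l^H_A(P(y),y)+l^H_A(y,P(y))]-l^H_A(P(y),y)$, and asserts it is strictly positive. Your write-up is in fact more thorough than the paper's, spelling out the telescoping and the use of Theorem~6.1 to propagate strictness from $y^*$ up to every node on the path, and your observation that the hypothesis on $S_{y'}(p)$ is superfluous is a valid remark the paper does not make.
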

\begin{proof}
This is similar to case-3 in Theorem-6.3.\\ Consider $\langle p, l^H_A(y') \rangle - \langle p, l^H_A(y^*) \rangle$:
\begin{equation*}
    \begin{split}
    \langle p, l^H_A(y') \rangle - \langle p, l^H_A(y^*) \rangle & = \sum\limits_{Path(y', y^*) \setminus y'} S_y(p)[l^H_A(P(y),y)+l^H_A(y,P(y)]-l^H_A(P(y),y)\\
    & > 0
    \end{split}
\end{equation*}
Thus, predicting $y^*$ is more optimal than $y'$. Hence, $y'$ is not a Bayes Optimal Classification.
\end{proof}
From Theorem 7.1 and Theorem 7.2, the conditions mentioned in Theorem 6.3 are necessary for Bayes Optimal Classification over asymmetric Tree-distance loss.
\newpage
\section{Algorithm to find Bayes optimal classification over asymmetric loss under assumptions}
Since $\forall y \in C(r)$, $\frac{l^H_A(y, r)}{l^H_A(r, y)} \leq 1$ $\rightarrow$ $\frac{l^H_A(r, y)}{l^H_A(r, y)+l^H_A(y,r)} \geq 0.5$ and $\forall y \in [n]$, $\forall y^1 \in C(y)$, $\frac{l^H_A(y, y^1)}{l^H_A(y, y^1)+l^H_A(y^1, y)}$ increases down the tree, we can design $\mathcal{O}(\log{}n)$ algorithm for a binary tree and the following algorithm is for a binary tree as hierarchy and it can be easily extended for a k-ary tree.
\subsection{Algorithm}
\begin{algorithm}
\SetAlgoLined
\KwData{$H = ([n], E, W)$, $l^{H}_A : [n] \times [n] \rightarrow \mathbb{R_+}$ and for each node $y$: $p_y$, $S_y(p)$}
\KwResult{Bayes optimal classification $(Bopt)$}
 $y$ = root \;
 $value$ = 1 \;
 \While{$value$ $\neq 0$}{
    \If{$isLeaf(y)$}{
        break\;
    }
    
    $y^1$, $y^2$ = C(y) \;
    \If{$S_{y^1}(p) \geq \frac{l^H_A(y,y^1)}{l^H_A(y,y^1)+l^H_A(y^1,y)}$}{
        $y$ = $y^1$ \;
        $value$ = 1 \;
    }
    \ElseIf{$S_{y^2}(p) \geq \frac{l^H_A(y,y^2)}{l^H_A(y,y^2)+l^H_A(y^2,y)}$}
    {
        $y$ = $y^2$ \;
        $value$ = 1 \;
    }
    \Else{
    $y$ = $y^2$ \;
    $value$=0 \;}
  }
  $Bopt$ = ($value$) $?$ $y$ : $P(y)$ \;
  return $Bopt$ \;
 \caption{Bayes optimal classification}
 \label{alg:2}
\end{algorithm}

In words, we start at the root node and keep on moving to the child of current node that satisfies the conditions mentioned in Theorem-7.3 and terminate when we reach a leaf node, or a node where all of its children fail the conditions.
\subsection{Time complexity analysis}
This is a tree traversal from root node to the bayes optimal classification where in the worst case, it visits all children of one node per level. So, for a k-ary tree, time complexity is $\mathcal{O}(k\log{}n)$ and for a binary tree, time complexity is $\mathcal{O}(\log{}n)$.

\section{Results}
The experiments are run on CLEF dataset (\citet{dimitrovski2011hierarchical}). CLEF dataset consists of Medical X-ray images organized according to a hierarchy. We use tree-distance loss in (\citet{sun2001hierarchical}) as evaluation metric. We vary training methods and loss metrics. Symmetric method is the bayes optimal classifier proposed by (\citet{ramaswamy2015convex}) and  asymmetric method is the algorithm-\ref{alg:2}. We convert asymmetric tree to symmetric tree by replacing the up and down edges with a single undirected edge whose weight is the average of the up and down weights. The results are shown in table-1.

\begin{table}[h!]
\begin{center}
\label{table:result}
\begin{tabular}{ |l|c|c| } 
 \hline
 {\bf Training method} & {\bf Loss metric} & {\bf Tree distance loss}\\
 \hline
  Asymmetric method & Asymmetric loss & $0.72$ \\ 
  Asymmetric method & Symmetric loss & $0.82$ \\ 
  Symmetric method & Asymmetric loss & $0.75$ \\ 
  Symmetric method & Symmetric loss & $0.80$ \\ 
 \hline
\end{tabular}
\end{center}
\caption{Empirical Results}
\end{table}

\section{Conclusion}
In this project, we propose $\mathcal{O}(nk\log{}n)$ algorithm for finding bayes optimal classification over symmetric/asymmetric loss metric for k-ary tree as hierarchy. We propose $\mathcal{O}(k\log{}n)$ algorithm for finding bayes optimal classification over asymmetric loss under reasonable assumptions for a k-ary tree as hierarchy. From experiments, we conclude that given an asymmetric tree, one can't achieve good performance by converting asymmetric tree to symmetric tree and apply symmetric method. So, if we have an asymmetric tree then we have to use asymmetric method for achieving good performance.

\section{Future Improvements}
We can do following improvements and we are currently working on some of them:
\begin{itemize}
    \item We can include reject option i.e. classifier abstains from predicting if it is not confident enough.
    \item We can improve it for an interactive hierarchical classification which can be a huge boost in e-commerce industry.
    \item We can design a scalable algorithm with surrogates which uses our algorithm as basis.
\end{itemize}

\bibliographystyle{plainnat}
\bibliography{main}

\begin{thebibliography}{4}
\providecommand{\natexlab}[1]{#1}
\providecommand{\url}[1]{\texttt{#1}}
\expandafter\ifx\csname urlstyle\endcsname\relax
  \providecommand{\doi}[1]{doi: #1}\else
  \providecommand{\doi}{doi: \begingroup \urlstyle{rm}\Url}\fi

\bibitem[Dimitrovski et~al.(2011)Dimitrovski, Kocev, Loskovska, and
  D{\v{z}}eroski]{dimitrovski2011hierarchical}
Ivica Dimitrovski, Dragi Kocev, Suzana Loskovska, and Sa{\v{s}}o
  D{\v{z}}eroski.
\newblock Hierarchical annotation of medical images.
\newblock \emph{Pattern Recognition}, 44\penalty0 (10):\penalty0 2436--2449,
  2011.

\bibitem[Gupta et~al.(2016)Gupta, Karnick, Bansal, and Jhala]{gupta2016product}
Vivek Gupta, Harish Karnick, Ashendra Bansal, and Pradhuman Jhala.
\newblock Product classification in e-commerce using distributional semantics.
\newblock In \emph{Proceedings of COLING 2016, the 26th International
  Conference on Computational Linguistics: Technical Papers}, pages 536--546,
  2016.

\bibitem[Ramaswamy et~al.(2015)Ramaswamy, Tewari, and
  Agarwal]{ramaswamy2015convex}
Harish~G Ramaswamy, Ambuj Tewari, and Shivani Agarwal.
\newblock Convex calibrated surrogates for hierarchical classification.
\newblock In \emph{ICML}, pages 1852--1860, 2015.

\bibitem[Sun and Lim(2001)]{sun2001hierarchical}
Aixin Sun and Ee-Peng Lim.
\newblock Hierarchical text classification and evaluation.
\newblock In \emph{Data Mining, 2001. ICDM 2001, Proceedings IEEE International
  Conference on}, pages 521--528. IEEE, 2001.

\end{thebibliography}

\end{document}